\newcommand{\cmark}{\textcolor{green!60!black}{\ding{51}}}
\newcommand{\xmark}{\textcolor{red!70!black}{\ding{55}}}
\newcolumntype{x}[1]{>{\centering\let\newline\\\arraybackslash\hspace{0pt}}p{#1}}
\def\eqref#1{equation~\ref{#1}}
\def\1{\bm{1}}
\def\vv{{\bm{v}}}
\def\vx{{\bm{x}}}
\def\mA{{\bm{A}}}
\def\mB{{\bm{B}}}
\DeclareMathAlphabet{\mathsfit}{\encodingdefault}{\sfdefault}{m}{sl}
\SetMathAlphabet{\mathsfit}{bold}{\encodingdefault}{\sfdefault}{bx}{n}
\newcommand{\E}{\mathbb{E}}
\newcommand{\R}{\mathbb{R}}
\DeclareMathOperator*{\argmax}{arg\,max}
\newtheorem{theorem}{Theorem}[section]
\newtheorem{lemma}[theorem]{Lemma}
\newtheorem{proposition}{Proposition}
\theoremstyle{definition}
\theoremstyle{remark}
\newtheorem{remark}[theorem]{Remark}
\definecolor{cvprblue}{rgb}{0.21,0.49,0.74}
\title{Vector Prism: Animating Vector Graphics by Stratifying Semantic Structure}
\author{Jooyeol Yun, \quad Jaegul Choo\\
KAIST AI\\
{\tt\small \{blizzard072, jchoo\}@kaist.ac.kr}
}
\begin{document}
\twocolumn[{
  \renewcommand\twocolumn[1][]{#1}
  \maketitle

  \begin{center}
    \captionsetup{labelformat=empty}
    \begin{minipage}[t]{.23\linewidth}
        \centering
        \animategraphics[controls=play,loop,width=\linewidth]{12}{animations/19-1-cool/frames/frame-}{000}{059}
        \captionof{figure}{I want the emoji to look to the left and right.}
    \end{minipage}\hspace{0.2cm}%
    \begin{minipage}[t]{.23\linewidth}
        \centering
        \animategraphics[controls=play,loop,width=\linewidth,poster=last]{12}{animations/6-1-rainbow/frames/frame-}{000}{059}
        \captionof{figure}{I want the elements smoothly pop up in a lively manner.}
    \end{minipage}\hspace{0.2cm}%
    \begin{minipage}[t]{.23\linewidth}
        \centering
        \animategraphics[controls=play,loop,width=\linewidth]{24}{animations/18-2-compass/frames/frame-}{000}{95}
        \captionof{figure}{I want the compass needle to quickly spin around once.}
    \end{minipage}\hspace{0.2cm}%
    \begin{minipage}[t]{.23\linewidth}
        \centering
        \animategraphics[controls=play,loop,width=\linewidth]{12}{animations/8-2-calculator/frames/frame-}{000}{035}
        \captionof{figure}{I want the buttons to bounce in one by one. }
    \end{minipage}
    \captionsetup{labelformat=default}
    \setcounter{figure}{0}    
    \captionof{figure}{Animations generated by Vector Prism. Please view them in Adobe Acrobat or the Firefox browser for the best experience. An HTML version is available in the \href{https://yeolj00.github.io/personal-projects/vectorprism}{project page}.}
    \label{fig:teaser}
  \end{center}
}]

\begin{abstract}
Scalable Vector Graphics (SVG) are central to modern web design, and the demand to animate them continues to grow as web environments become increasingly dynamic.
Yet automating the animation of vector graphics remains challenging for vision–language models (VLMs) despite recent progress in code generation and motion planning.
VLMs routinely mis-handle SVGs, since visually coherent parts are often fragmented into low-level shapes that offer little guidance of which elements should move together. 
In this paper, we introduce a framework that recovers the semantic structure required for reliable SVG animation and reveals the missing layer that current VLM systems overlook. 
This is achieved through a statistical aggregation of multiple weak part predictions, allowing the system to stably infer semantics from noisy predictions.
By reorganizing SVGs into semantic groups, our approach enables VLMs to produce animations with far greater coherence. 
Our experiments demonstrate substantial gains over existing approaches, suggesting that semantic recovery is the key step that unlocks robust SVG animation and supports more interpretable interactions between VLMs and vector graphics.
\end{abstract}

\section{Introduction}
Scalable Vector Graphics (SVG) has become increasingly central to modern web experiences, prized for its portability across devices and infinite scalability without quality loss. 
This popularity is driven by their vector-based design, which describes graphics through geometric primitives rather than pixels, resulting in compact and resolution-independent files. 
As modern web interfaces evolve toward dynamic and interactive experiences, the demand for expressive animation techniques has become essential, since SVG animations can deliver rich visual motion where videos would be prohibitively heavy for web delivery.

Recent advances in vision-language models (VLMs)~\cite{llava, gpt5, qwen3} offer a tempting possibility, which is generating animations simply by instructing a VLM given the SVG file. 
At first glance, this seems to be straightforward, since modern vision-language models can already plan animation sequences~\cite{wei2022chain} and generate code~\cite{swe, chen2021evaluating}. 
In practice, VLM-generated SVG animations rarely succeed, often resulting in visually broken animations. 
The problem lies not in the planning or coding capabilities, but in how SVGs are structured, as SVGs are optimized for rendering efficiency rather than semantic clarity. 
For example, as seen in \Cref{fig:intro-groups}, visually coherent elements (\eg, bunny ears and nose) are often fragmented or grouped by draw order, obscuring the higher-level semantics needed for animation.

In this paper, we address the overlooked step of restructuring SVGs so that vision-language models can reason about meaningful parts during animation. 
Our aim is to reveal an internal structure for SVGs that allows a model to reference semantic units and attach motion to correct semantic units. 
The native SVG hierarchy rarely provides this structure, which motivates a method that can reliably recover the semantics required for animation. 

We introduce Vector Prism, a frame work that performs this recovery by stratifying noisy visual cues into coherent semantic groups, much like a prism for vector graphics. 
Each SVG primitive (\ie, basic shapes) is rendered through several focused views (\eg, highlighting, isolation, zoom-in, outlining, and bounding boxes) and a VLM predicts its semantic meaning, producing a set of weak, tentative semantic labels. 
Instead of aggregating these predictions using simple majority voting, Vector Prism interprets these predictions through the lens of a statistical inference process~\cite{dawid1979maximum}.
Specifically, our method analyzes agreement patterns across weak labels and infers the underlying semantic signal with high stability. 
A Bayes decision rule then selects labels that minimize expected classification error and recover the most plausible true part structure.

These refined labels form the basis for the final stage, where Vector Prism restructures SVG primitives into coherent, animation-ready hierarchies. 
This restructuring bridges the gap between the visual semantics of the artwork and the syntactic organization of the SVG file, aligning the representation with how VLMs perceive and manipulate visual concepts. 
As a result, VLMs can animate graphics at the level of meaningful parts rather than low-level shapes, producing motions that are both visually stable and semantically consistent.

Our contributions are threefold:
\begin{itemize}
    \item We formalize the overlooked challenge that SVG files are structured for rendering efficiency rather than semantic clarity, making them ill-suited for animation. We introduce the problem of semantic SVG restructuring and propose a principled methodology for recovering animation-relevant part structure. 
    \item We propose Vector Prism, a statistical inference framework that transforms noisy view-dependent predictions from vision–language models into reliable semantic labels. By combining weak labels from multiple focused visualizations, our method infers robust underlying semantics.
    \item Our experiments demonstrate significant improvements over state-of-the-art methods in animation quality and instruction faithfulness, even outperforming commercial services such as Sora~2.
\end{itemize}

\begin{figure}
    \centering
    \includegraphics[width=\linewidth]{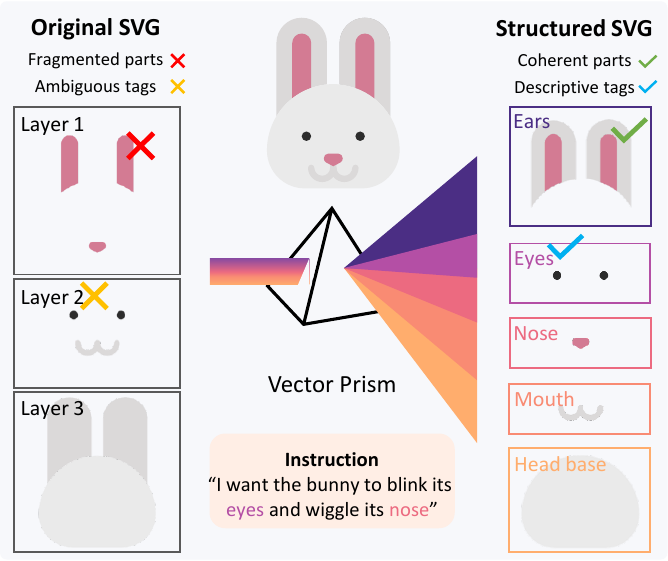}
    \vspace{-0.3cm}
    \caption{Unstructured SVG contains fragmented elements and unclear tags, while structured SVG organizes parts with descriptive tags, ensuring alignment between SVG syntax and user instructions. }
    \vspace{-0.1cm}
    \label{fig:intro-groups}
\end{figure}

\begin{figure*}
    \centering
    \includegraphics[width=\linewidth]{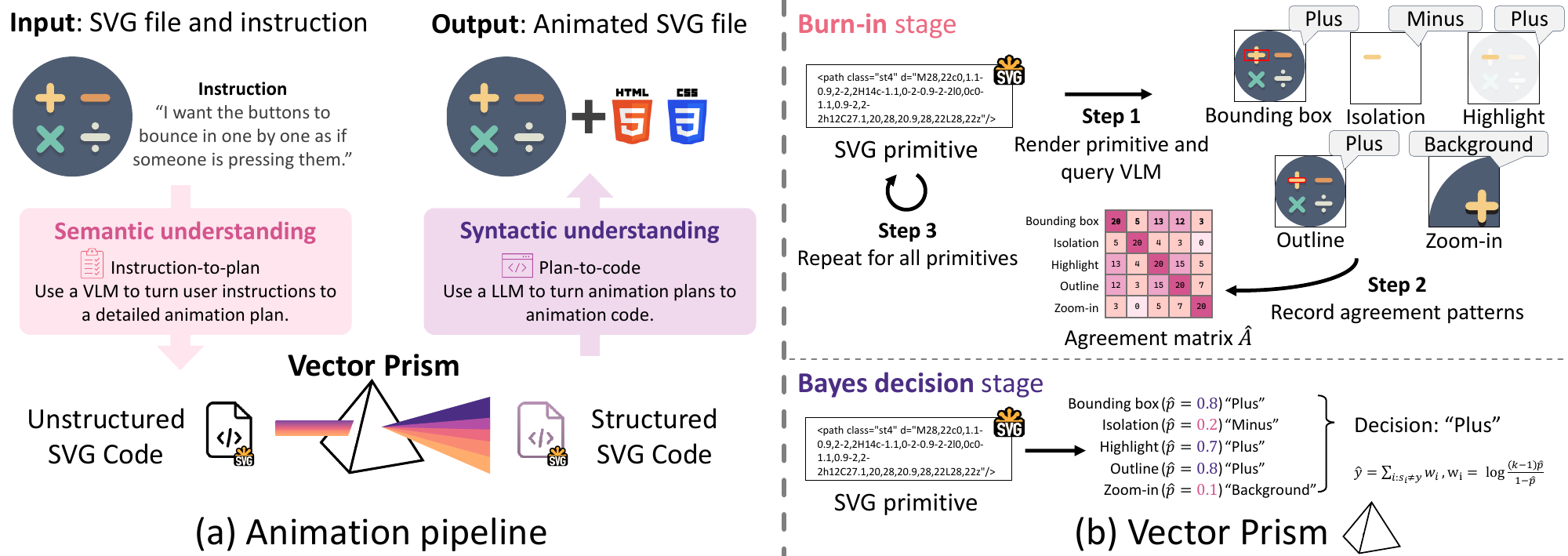}
    \label{fig:method-main}
    \caption{(a) Animation pipeline overview. We first create a detailed animation plan, then , then create the animation code for the structured SVG. (b) Vector Prism overview. We collect agreement patterns of response from different rendering methods and }
\end{figure*}

\section{Related Work}
\paragraph{Vector Graphics Animation}
One line of work generates or animates vector graphics by optimizing vector (or animation) parameters using gradients from pre-trained image/video diffusion priors~\cite{dynamictypo,aniclipart,stylecustomization,zhang2024text,wang2025layered}, typically via score distillation sampling (SDS)~\cite{sds,ldm,svd}. 
Since the SDS objective acts on rasterized renderings rather than vector structure, it encourages appearance preserving changes and resists large part rearrangements that animation often needs. 
Without explicit temporal regularization, the optimization often settles into short repetitive motions with visible jitter. 

Another active stream fine-tunes LLMs to directly produce vector graphics parameters or animation commands~\cite{starvector,omnisvg,wang2025internsvg}, enabled by large paired datasets of vector graphics and human instructions. 
Because LLMs carry little understanding for vector geometry and scene hierarchies~\cite{svgeditbench,zouvgbench}, performance scales primarily with data, often requiring millions of examples. 
Orthogonal to data scaling, we focus on recovering element-level semantics in the input SVGs, so that downstream LLMs/VLMs can robustly plan motions and generalize to diverse, in-the-wild graphics.

\paragraph{Semantic Understanding of Vector Graphics}
Raw symbolic representations of vector graphics (\eg, shape coordinates and translation matrices) are designed for rendering and programmatic manipulation rather than human reading or editing, which makes them inherently difficult for humans to directly inspect and understand~\cite{regroup,beyondpixels,cai2023leveraging,xing2025empowering}. 
This limitation has been highlighted as the research community seeks to teach LLMs, which often rely on perceptual cues similar to humans, to understand and edit vectorized formats~\cite{svgeditbench,lin2025vcode,qiu2024can,zouvgbench}.

Although VLMs tend to understand rasterized renderings of simple and well-separated vector graphics~\cite{beyondpixels}, we find that they quickly fail to understand individual parts of complex real-world cases. 
In this paper, we take a significant step in vector graphics understanding by aiming not only to target complex real-world SVGs but to identify and label individual SVG primitives, which is exactly the capability required for animation. 
To do so, we present a statistical inference framework that makes unreliable and noisy LLM outputs into robust decisions, enabling animation possible even without finetuning VLMs.

\section{Method}
\subsection{The overview}
As illustrated in \Cref{fig:method-main}, the pipeline begins with animation planning (\ref{sec:method-plan}), where a vision–language model (VLM) interprets the visual content and generates a detailed scheme of how each semantic components should be animated.
It then proceeds to semantic wrangling (\ref{sec:method-vp}), where the SVG is restructured into a semantically meaningful and animatable form through a statistical inference, and finally to animation generation (\ref{sec:method-animate}), which produces executable animation code. 

The planning stage provides semantic understanding of the scene, while the animation stage operates directly on the SVG code.
Our restructuring stage bridges this gap by injecting semantic meaning into the SVG, enriching its structure with interpretable tags that connect visual reasoning to code-level representation.
The core contribution of our approach lies in this stage, where we introduce a statistical inference framework that makes reliable semantic inferences from inherently noisy model predictions.

\subsection{Animation planning}
\label{sec:method-plan}
The planning stage uses a VLM to reason about the scene at a semantic level. 
The SVG is first rendered into a raster image so it can be understood by the VLM, which offers strong visual signals compared to the original SVG code representations.  
The VLM is then instructed to produce high-level animation plans given the rendered image and the user's animation description, identifying which semantic components should move and how they relate to one another.
For example, when prompted to \emph{``make the sun rise,''} the VLM identifies the circular yellow region as the sun and the blue background as the sky, proposing that the sun should move upward while the sky gradually brightens.

Since VLMs lack an understanding of the symbolic structures (\ie, SVG syntax), they have no way to directly implement those plans into the SVG's syntactic hierarchy. 
Bridging this semantic–syntactic divide is precisely the role of the restructuring stage.

\subsection{Vector Prism}
\label{sec:method-vp}

\paragraph{Problem setup and notations}
Given a SVG file, let  $\mathcal{X}$ be the set of all the primitives, which are basic shapes such as \texttt{<path>}, \texttt{<rect>}, \texttt{<circle>}, \texttt{<ellipse>}, \texttt{<line>}, \texttt{<polyline>}, and \texttt{<polygon>}.
Every primitive should fall into the one of the semantic categories $\mathcal{Y}=\{1,\dots,k\}$ fixed from the planning stage. 
For each primitive $\vx\in\mathcal{X}$ there is an unknown true label $y(\vx)\in\mathcal{Y}$ that we want to predict.

For a SVG primitive to be visually interpreted by a VLM, it first needs to be rendered into a raster image. 
Deciding how to render $\vx$ is non-trivial, and thus we use $M$ different rendering methods indexed by $i\in\{1,\dots,M\}$. 
This provides complementary views of the target primitive, which helps us safely collect different weak labels of the same primitive. 
Examples include highlight on the original canvas, a tight bounding-box overlay, a zoomed crop, and isolation on a blank background.
When we use method $i$ to render a primitive $\vx$, the VLM returns a label $s_i(\vx)\in\mathcal{Y}$.

We assume a Dawid-Skene model~\cite{dawid1979maximum} for each rendering method, 
\[
\Pr[s_i=\ell]=
\begin{cases}
p_i, & \ell=y,\\
\frac{1-p_i}{k-1}, & \ell\neq y.
\end{cases}
\]
where a rendering method $i$ has as accuracy $p_i$ and fails uniformly over the other $k-1$ labels. 
We will recover the reliability $p_i$ of each strategy. 

\paragraph{From pairwise agreement to reliability}
Under the model above, VLM responses from two different renderings $i$ and $j$ would agree either because both are correct or because both pick the same wrong label.
Thus, the probability of agreement is
\begin{equation}
\mA_{ij}=\Pr[s_i=s_j]=p_i p_j+\frac{(1-p_i)(1-p_j)}{k-1}.
\label{eq:agree}
\end{equation}
Since two random guesses could still agree by chance with probability $1/k$, we write $\delta_i=p_i-\tfrac{1}{k}$ and
\begin{equation}
\mA_{ij}=\frac{1}{k}+\frac{k}{k-1}\,\delta_i\delta_j \quad (i\neq j).
\label{eq:agree-decomp}
\end{equation}
to separate chance from skill. 
Subtracting the chance term gives a centered agreement matrix $\mB$ with
$\mB_{ij}=\mA_{ij}-\tfrac{1}{k}$ for $i\neq j$ and $\mB_{ii}=0$.
Matrix $\mB$ is rank one on the off-diagonals
\begin{equation}
\E[\mB]=\frac{k}{k-1}\,\bm{\delta}\bm{\delta}^\top,
\label{eq:rank1}
\end{equation}
which is the outer product of $\delta$. 
Let $\lambda$ and $\vv$ be the top eigenvalue and eigenvector of $\mB$, then 
\[
\bm\delta=\sqrt{\frac{\lambda(k-1)}{k}}\,\vv,
\qquad
p_i=\frac{1}{k}+\delta_i,
\] 
with the sign of $\vv$ chosen so that $\sum_i \hat\delta_i\ge 0$. 
In this way, given the agreement matrix $\mA$, we can recover the initially unknown reliability of each VLM response $i$. 

The agreement matrix $\mA$ can be empirically estimated by a burn-in pass, traversing the SVG primitives and collecting the agreement patterns
\[
\hat \mA_{ij}=\frac{1}{|\mathcal{X}|}\sum_{\vx\in\mathcal{X}}\mathbf{1}[s_i(\vx)=s_j(\vx)].
\]
Following \Cref{eq:agree-decomp} and \Cref{eq:rank1}, we can obtain $\hat{\bm\delta}$ and consequently a reliability $\hat{p}_i$ for each rendering method. 

\paragraph{From reliability to semantic labels}
With reliabilities $\hat p_i$ in hand, we score each candidate label $y\in\mathcal{Y}$ for a given element using Bayes' decision rule with a uniform prior
\[
\log P(y\mid s)=\mathrm{const}
+\sum_{i:\, s_i=y}\log \hat p_i
+\sum_{i:\, s_i\ne y}\log\frac{1-\hat p_i}{k-1}.
\]
This is equivalent to a weighted vote with
\[
w_i=\log\frac{(k-1)\hat p_i}{1-\hat p_i}, \qquad
\hat y=\argmax_y \sum_{i:\, s_i=y} w_i.
\]
When all VLMs are equally reliable, all $w_i$ are equal and the decision rule reduces to majority voting. 
A probability bound comparing this rule to majority voting and showing a strict advantage whenever VLM reliabilities differ, is provided in the Appendix. 

\paragraph{From semantic labels to a new structure}
\label{sec:method-restructure}
Once reliable semantic labels are available, restructuring the SVG becomes a straightforward step that turns meaning into organization without changing appearance.
Although SVGs are usually grouped for rendering efficiency, not semantics, this step only needs to reorganize existing elements rather than reinterpret them.
For example, shapes that share similar transformations may be grouped together even if they represent different objects, causing unrelated parts to move together.
With correct labels, this can be easily fixed.

Our restructuring algorithm attaches each label as a \texttt{class} attribute and flattens the hierarchy so that all visual properties are applied directly to each primitive, preserving appearance.
Primitives are then regrouped by label while maintaining the original paint order.
Overlaps between different labels are checked to prevent rendering changes.
The resulting SVG looks identical but is organized into meaningful parts ready for animation.
Full pseudocode are provided in the Appendix and the code will be released upon acceptance.

\subsection{Animation generation}
\label{sec:method-animate}
The LLM is instructed to animate the restructured SVG file according to the animation plan using CSS. 
While the earlier pipeline steps do not restrict generating animations to the CSS markup type, CSS was chosen for its simplicity, and our method has the capability to extend to complex animations using JavaScript or specialized libraries. 

Animation code can become lengthy, often exceeding the token generation limits of many models.
To address this constraint, we adopt an iterative generation strategy~\cite{alphacode,codegen}, where CSS animations are generated separately for each semantic category, with previously completed animations retained in the context for subsequent generations. 
To prevent conflicting animations, we enforce strict animation rules that ensure mutual exclusivity between generated effects. 
Complete prompts are provided in the Appendix.

\section{Experiments}
\subsection{Dataset}
Our test dataset consists of 114 carefully curated animation instructions and SVG pairs, designed to test a variety of SVG animation techniques. 
The instruction set covers a broad range of animation tasks, from simple movements to complex actions such as 3D rotations and synchronized transitions. 
The SVG files were sourced from SVGRepo, ensuring a diverse collection of objects and scenes, including animals, logos, buildings, and natural elements like fire, clouds, and water. 
The goal of this dataset is to evaluate the performance of SVG animation tools and systems by providing clear, detailed animation instructions that simulate real-world use cases in web environments. 
The animation categories and their performance are discussed in detail in the appendix. 

\subsection{Baselines}
\paragraph{AniClipart}
AniClipart~\cite{aniclipart} represents the optimization-based animation methods, which optimizes animation parameters such as keypoint movements, using the Score Distillation Sampling loss~\cite{sds}. 
While AniClipart does not output standard animation formats, it defines B\'ezier curves for keypoints within SVG files, enabling direct vector graphics animation.

\paragraph{GPT-5}
GPT-5 is reported to have one of the best understanding of symbolic representation among LLMs~\cite{gpt5}. 
However, we observe that naive prompting of LLMs to generate animation code rarely produces meaningful motion. 
Therefore, we augment GPT-5 with the same high-level planning and animation generation pipeline employed in our framework to ensure fair comparison. 
In this configuration, GPT-5 generates CSS animations in vector format.

\paragraph{Video generation models}
We include two video generation models, the open-sourced Wan2.2 14B model~\cite{wan2025} and OpenAI's Sora2 service~\cite{sora2}. 
Although these models produce rasterized video output (\texttt{.mp4}) and cannot generate the desired vector files, we include them to cover a wide scope of animation generation technique, especially as these models demonstrate high performances in instruction following and video quality. 

\begin{table}[]
\resizebox{\linewidth}{!}{
\begin{tabular}{@{}ccccc@{}}
\toprule
                            & CLIP-T2V & GPT-T2V & DOVER & Vector \\ \midrule
AniClipart~\cite{aniclipart}& 15.66    & 23.96   & 3.35 & \cmark \\
GPT-5~\cite{gpt5}           & 20.67    & 40.92   & 4.92 & \cmark \\
\rowcolor{gray!20} 
Wan 2.2~\cite{wan2025}      & 21.14    & 65.21   & 3.72 & \xmark \\ 
\rowcolor{gray!20} 
Sora 2~\cite{sora2}         & 20.29    & 69.08   & 4.19 & \xmark \\ \midrule
Ours  & \textbf{21.55} & \textbf{76.14} & \textbf{4.97} & \cmark \\ \bottomrule
\end{tabular}
}
\vspace{-0.2cm}
\caption{Animation quality and instruction-following scores across different methods. The checkmark indicating whether each method generates vector-based animations.}
\vspace{-0.5cm}
\label{tab:exp-quanti}
\end{table}

\subsection{Implementation Details}
We use GPT-5-nano, which is $25\times$ more cost-efficient than GPT-5, as the underlying vision–language model for planning and semantic labeling, while GPT-5 is used for animation generation. 
Our semantic labeling stage is statistically robust to noise and operates with minimal computational overhead, enabling lightweight models to perform reliably without sacrificing accuracy.
All SVG primitives are rendered at $512\times512$ resolution when given as a VLM input for analysis. 

We do not share the agreement matrix across SVGs, since we find that the reliability of each rendering method can vary depending on the visual complexity and structure of the SVG. 
During the burn-in stage, where agreement patterns are collected, a single full pass over all primitives within each SVG provides a good balance between estimation stability and computational efficiency. 

\subsection{Quantitative Evaluation}
We evaluate the generated animations using two instruction-following metrics and one perceptual quality metric.
Following InternSVG~\cite{wang2025internsvg}, we measure the correspondence between animation instructions and rendered videos using a video-pretrained CLIP model~\cite{viclip,clip}, referred to as CLIP-T2V. 
To complement this, we introduce the GPT-T2V score, where GPT grades each video based on how accurately its motion follows the given instruction. 
This follows the growing use of LLM-based evaluators for instruction following and multimodal reasoning~\cite{zheng2023judging}. 
Finally, we assess perceptual quality with DOVER~\cite{wu2023dover}, an off-the-shelf video quality assessment model that captures both technical fidelity and visual aesthetics. 
Also, a trade-off between perceptual quality and instruction following can easily occur, as limiting motion often leads to higher visual quality, whereas enforcing movement to meet the instruction can reduce perceptual fidelity. 

As shown in \Cref{tab:exp-quanti}, our method achieves the best scores across all metrics, demonstrating clear advantages in both motion realism and instruction faithfulness. 
This improvement comes from the ability to expose meaningful parts of the SVG prior to animation, allowing the model to attach coherent motions to relevant semantic parts. 
It is also important to note that vector-based animations typically struggle with instruction-following compared to video generation models, as video models are heavily trained on video-text pairs. 
However, this limitation is not inherent to the vector-based format, but rather stems from the lack of semantic understanding of vectors. 
Our method overcomes this and outperforms video models as well, without training on large-scale video video-text datasets.

\begin{figure*}[ht]
    \centering
    \vspace{-0.5cm}
    \begin{tabular}{ccccc}
        \multicolumn{5}{c}{\textbf{I want an opening animation for the SVG, starting from the bottom and moving up to the top.}} \\
        \multicolumn{5}{c}{ } \\
        \begin{minipage}{0.18\textwidth}
            \centering
            \animategraphics[controls=play,loop,width=0.9\linewidth]{12}{videos/57-2-aniclip/frames/frame-}{000}{035}
            \caption*{AniClipart}
        \end{minipage} &
        \begin{minipage}{0.18\textwidth}
            \centering
            \animategraphics[controls=play,loop,width=0.9\linewidth]{12}{videos/57-2-gpt/frames/frame-}{000}{59}
            \caption*{GPT-5}
        \end{minipage} &
        \begin{minipage}{0.18\textwidth}
            \centering
            \animategraphics[controls=play,loop,width=0.9\linewidth]{12}{videos/57-2-wan/frames/frame-}{000}{60}
            \caption*{Wan~2.2}
        \end{minipage} &
        \begin{minipage}{0.18\textwidth}
            \centering
            \animategraphics[controls=play,loop,width=0.9\linewidth]{12}{videos/57-2-sora/frames/frame-}{000}{047}
            \caption*{Sora~2}
        \end{minipage} &
        \begin{minipage}{0.18\textwidth}
            \centering
            \animategraphics[controls=play,loop,width=0.9\linewidth,poster=last]{12}{animations/57-2-windmill/frames/frame-}{000}{059}
            \caption*{Ours}
        \end{minipage} \\ 
        \hline

        \multicolumn{5}{c}{ } \\
        \multicolumn{5}{c}{\textbf{I want the lightning bolt to glow softly and the raindrops to fade in and out gently.}} \\
        \multicolumn{5}{c}{ } \\
        \begin{minipage}{0.18\textwidth}
            \centering
            \animategraphics[controls=play,loop,width=0.9\linewidth]{12}{videos/16-2-aniclip/frames/frame-}{000}{035}
            \caption*{AniClipart}
        \end{minipage} &
        \begin{minipage}{0.18\textwidth}
            \centering
            \animategraphics[controls=play,loop,width=0.9\linewidth]{12}{videos/16-2-gpt/frames/frame-}{000}{59}
            \caption*{GPT-5}
        \end{minipage} &
        \begin{minipage}{0.18\textwidth}
            \centering
            \animategraphics[controls=play,loop,width=0.9\linewidth]{12}{videos/16-2-wan/frames/frame-}{000}{60}
            \caption*{Wan~2.2}
        \end{minipage} &
        \begin{minipage}{0.18\textwidth}
            \centering
            \animategraphics[controls=play,loop,width=0.9\linewidth]{12}{videos/16-2-sora/frames/frame-}{000}{047}
            \caption*{Sora~2}
        \end{minipage} &
        \begin{minipage}{0.18\textwidth}
            \centering
            \animategraphics[controls=play,loop,width=0.9\linewidth]{12}{animations/16-2-lightning/frames/frame-}{000}{059}
            \caption*{Ours}
        \end{minipage} \\
        \hline

        \multicolumn{5}{c}{ } \\
        \multicolumn{5}{c}{\textbf{I want the stars and planets first to emerge gently and then the rings to appear in a stroke effect.}} \\
        \multicolumn{5}{c}{ } \\
        \begin{minipage}{0.18\textwidth}
            \centering
            \includegraphics[width=0.9\linewidth]{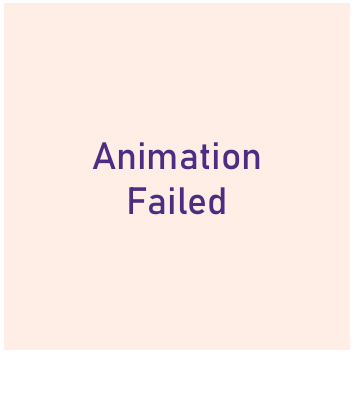}
            \caption*{AniClipart}
        \end{minipage} &
        \begin{minipage}{0.18\textwidth}
            \centering
            \animategraphics[controls=play,loop,width=0.9\linewidth]{12}{videos/32-1-gpt/frames/frame-}{000}{59}
            \caption*{GPT-5}
        \end{minipage} &
        \begin{minipage}{0.18\textwidth}
            \centering
            \animategraphics[controls=play,loop,width=0.9\linewidth]{12}{videos/32-1-wan/frames/frame-}{000}{60}
            \caption*{Wan~2.2}
        \end{minipage} &
        \begin{minipage}{0.18\textwidth}
            \centering
            \animategraphics[controls=play,loop,width=0.9\linewidth]{12}{videos/32-1-sora/frames/frame-}{000}{047}
            \caption*{Sora~2}
        \end{minipage} &
        \begin{minipage}{0.18\textwidth}
            \centering
            \animategraphics[controls=play,loop,width=0.9\linewidth,poster=30]{12}{animations/32-1-galaxy/frames/frame-}{000}{059}
            \caption*{Ours}
        \end{minipage} \\
        \hline

        \multicolumn{5}{c}{ } \\
        \multicolumn{5}{c}{\textbf{I want the hexagon to appear first, and then the X sign to enter by spinning in.}} \\
        \multicolumn{5}{c}{ } \\
        \begin{minipage}{0.18\textwidth}
            \centering
            \includegraphics[width=0.9\linewidth]{figures/exp_noimage.pdf}
            \caption*{AniClipart}
        \end{minipage} &
        \begin{minipage}{0.18\textwidth}
            \centering
            \animategraphics[controls=play,loop,width=0.9\linewidth]{12}{videos/12-1-gpt/frames/frame-}{000}{59}
            \caption*{GPT-5}
        \end{minipage} &
        \begin{minipage}{0.18\textwidth}
            \centering
            \animategraphics[controls=play,loop,width=0.9\linewidth]{12}{videos/12-1-wan/frames/frame-}{000}{60}
            \caption*{Wan~2.2}
        \end{minipage} &
        \begin{minipage}{0.18\textwidth}
            \centering
            \animategraphics[controls=play,loop,width=0.9\linewidth]{12}{videos/12-1-sora/frames/frame-}{000}{047}
            \caption*{Sora~2}
        \end{minipage} &
        \begin{minipage}{0.18\textwidth}
            \centering
            \animategraphics[controls=play,loop,width=0.9\linewidth,poster=last]{12}{animations/12-1-cancel/frames/frame-}{000}{59}
            \caption*{Ours}
        \end{minipage} \\
    \end{tabular}
    \caption{Animations generated by each method. Please use Adobe Acrobat, the Firefox browser, or the PDF.~js extension on Chromium browsers for the best experience~\cite{theanimatepackage}. An HTML version is available in the project page.}
    \label{fig:exp-quali}
\end{figure*}

\begin{figure}
    \centering
    \includegraphics[width=\linewidth]{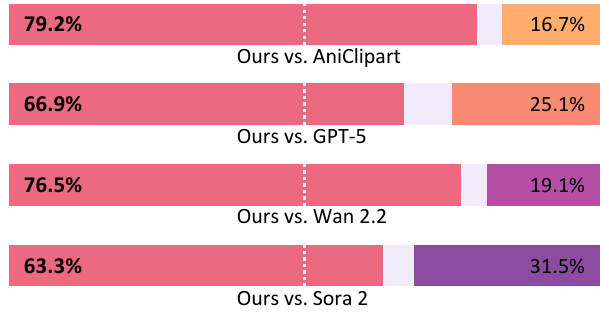}
    \caption{Human preference results comparing our method with baseline approaches. Pink segments represent preferences towards our method, and orange or purple segments represent the competing baseline.}
    \label{fig:exp-userstudy}
    \vspace{-0.5cm}
\end{figure}

\subsection{Qualitative Analysis}
AniClipart and GPT-5 often fail to produce meaningful motion since they lack explicit semantic understanding. 
These approaches interpret semantics implicitly, AniClipart through the diffusion prior and GPT-5 through internal representations, without explicit part labels or hierarchy. 
As a result, they tend to produce uniform motion across entire figures, leading to swaying or barely moving animations.

Video generation models, Wan~2.2 and Sora~2, generate richer motion than the above methods but often collapse into static frames or distorted scenes when given dynamic, animation-focused instructions such as ``An opening scene of the SVG.'' 
Note that these are rasterized videos rather than vector graphics, which makes them unsuitable for web-based animation tasks where lightweight rendering is essential. 
In contrast, our method translates instructions into motion entirely through the language domain, avoiding the limitations of multimodal training and dataset dependence. 

We showcase examples of the generated animations in \Cref{fig:exp-quali}, where the differences discussed above are clearly visible. 
Additional qualitative results and extended comparisons are provided in the project page for further reference.

\subsection{User Study}
To complement the quantitative evaluation, we conducted a user study to assess how well each generated animation aligns with the given instructions from a human perspective.
A total of 760 pairwise comparisons were collected from 19 participants.
In each trial, participants were shown two videos generated from the same instruction, each produced by a different method, and asked to select the one that better followed the instruction.
The aggregated preferences are summarized in \Cref{fig:exp-userstudy}, showing consistent favorability toward our method even when compared against state-of-the-art video generation models such as Sora~2 and Wan~2.2. 
We report the alignment between the user study outcomes and the GPT-T2V metrics in the Appendix. 

\begin{figure}
    \centering
    \includegraphics[width=\linewidth]{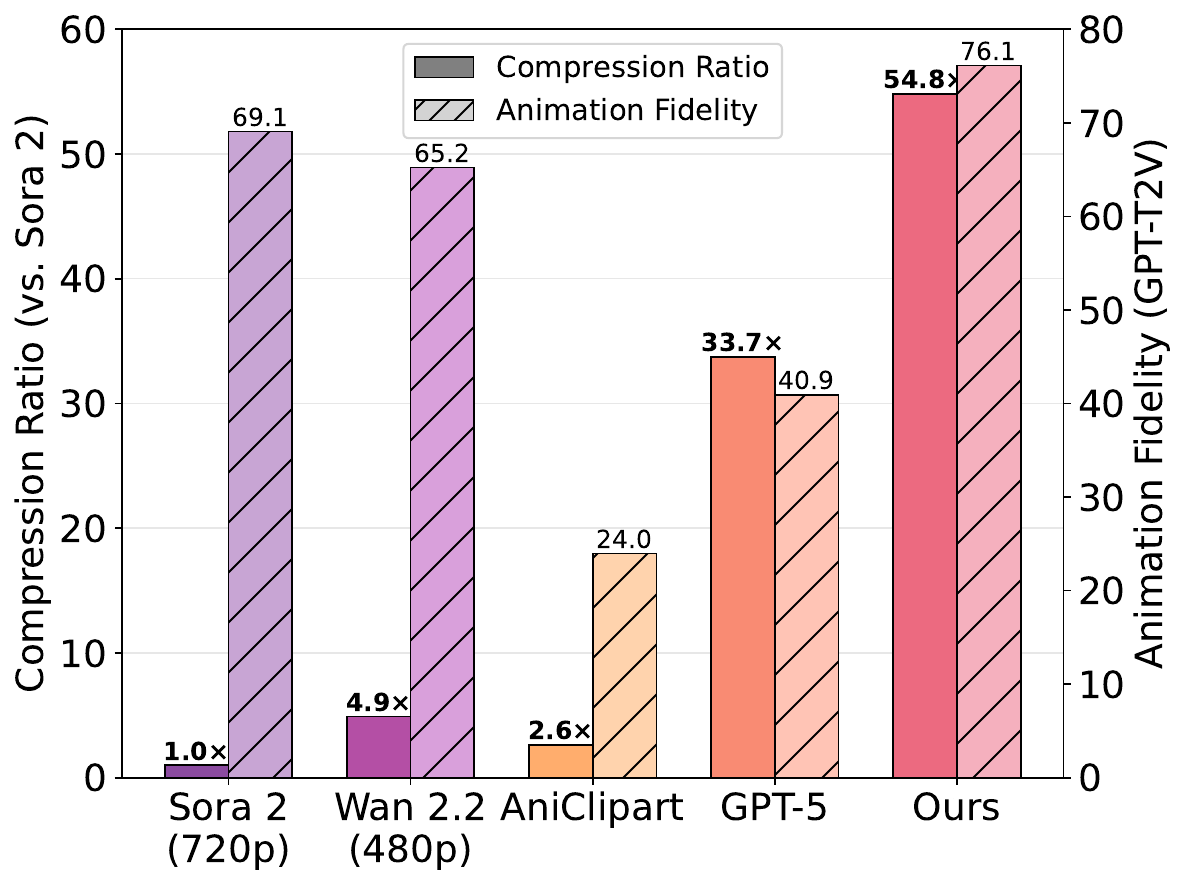}
    \vspace{-0.2cm}
    \caption{Dual-axis bar chart comparing compression ratio (left y-axis) and animation fidelity (right y-axis). Compression ratios are depicted by solid bars, and Animation Fidelity is shown with hatched bars. }
    \label{fig:exp-encoding}
\end{figure}

\section{Analysis}
\subsection{Encoding efficiency of vector-based animations}
We demonstrate the effectiveness of vector-based animations by comparing the compression ratio compared to Sora~2 and the animation fidelity in \Cref{fig:exp-encoding}. 
Typically, as the raster video resolution increases for quality (\eg, from 480p to 720p), the file size increases and the video less compressed. 
In contrast, the SVG animations generated by our approach describe motion through compact, symbolic CSS keyframes applied to geometric primitives. 
The resulting file size is primarily dependent on the complexity of the SVG structure (number of primitives) and the length of the animation code, not the output resolution or frame rate. 

This leads to a significant improvement in encoding efficiency compared to video models like Wan 2.2 and Sora 2, which generate every pixel of the animation, even when a vector representation is possible. 
Sora~2, for instance, results in an average file size that is $\times 54$ larger than those produced by our approach, with this gap widening as video resolution and duration increase. 
This makes our approach particularly well-suited for modern web environments, where lightweight assets are essential for fast loading times, responsive UI/UX, and reduced data consumption across networks.

\subsection{Stability compared to majority voting}
Evaluating the quality of semantic groupings in SVGs is challenging without ground truth labels, yet crucial for understanding whether our statistical inference produces coherent clusters. 
We treat each semantic group as a cluster and measure clustering quality using the Davies-Bouldin index (DBI)~\cite{dbi}, a metric that quantifies the ratio of within-cluster scatter to between-cluster separation.
We compute distances in the feature space of DINO v3~\cite{simeoni2025dinov3}, which provides semantically meaningful visual embeddings.

SVG files with their original, rendering-oriented groupings yield an average DBI of 33.8, reflecting the semantic incoherence of primitives grouped solely for drawing efficiency.
Majority voting with the same multi-view rendering techniques improves this to 12.6, demonstrating that aggregating multiple views helps, but still produces noisy groupings.
In contrast, Vector Prism achieves a DBI of \textbf{0.82}, indicating near-perfect semantic clustering.

The key advantage of our approach over majority voting is illustrated in \Cref{fig:analysis-dbi}.
When one rendering method produces unreliable predictions, correct only by chance, majority voting treats it equally with other reliable methods. 
This equal weighting allows the weakest reliable responses to occasionally flip the predicted label for certain primitives, creating inconsistent groupings that fragment semantically coherent parts.
Since animation quality depends on \emph{all} primitives being correctly grouped, even a small fraction of mislabeled elements can break the visual logic of motion.
By estimating reliability scores $\hat{p}_i$ for each rendering method, Vector Prism consistently downweights noisy VLM responses throughout the entire labeling process, ensuring stability across the full set of primitives.

\begin{figure}
    \centering
    \includegraphics[width=\linewidth]{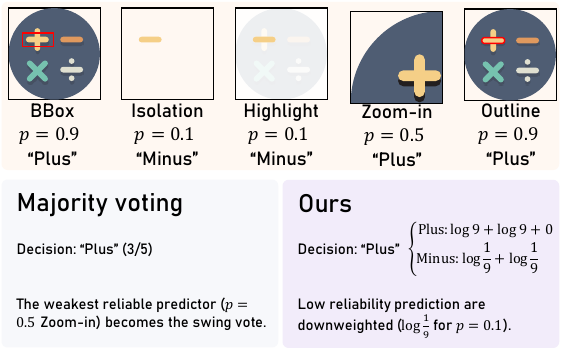}
    \caption{Example case of when Bayes decision rule can consistently make robust decisions even with noisy signals.}
    \label{fig:analysis-dbi}
\end{figure}

\subsection{Failure Cases}
Even with semantic groupings and a well-structured hierarchy, our method operates at the level of SVG primitives defined in the original file.
We treat primitives as atomic units and do not subdivide or decompose them further, which limits animation flexibility when the input SVG lacks granularity.
For example, as shown in \Cref{fig:analysis-failure}, the lightning shape is written as a single large \texttt{<path>} element, while the instruction requires this to ``shatter into pieces.''
The method fails to animate this part of the instruction, as the pieces themselves do not exist as independent primitives. 

This limitation could be addressed if users can refine their SVG files using vectorization tools such as VTracer~\cite{vtracer} or recent image-to-SVG models~\cite{starvector,omnisvg}, which generate SVGs with controllable levels of detail.
Alternatively, future work could explore automatic primitive subdivision strategies that identify and split overly coarse elements based on the animation requirements.

\begin{figure}
    \centering
    \adjustbox{valign=c}{%
        \includegraphics[width=0.68\linewidth]{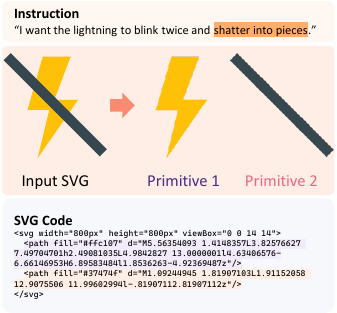}%
    }%
    \hfill
    \adjustbox{valign=c}{%
        \animategraphics[controls=play,loop,width=0.27\linewidth]{12}%
          {animations/10-2-flash/frames/frame-}{000}{35}%
    }%
    \caption{Failure case. Since the lightning bolt is defined as a single atomic \texttt{<path>} primitive (left), our approach cannot execute the operation beyond the input SVG's granularity (right).}
    \vspace{-0.4cm}
    \label{fig:analysis-failure}
\end{figure}

\section{Conclusion}
In this paper, we introduced Vector Prism, a novel framework designed to overcome the critical semantic-syntactic gap that prevents modern vision-language models (VLMs) from successfully animating Scalable Vector Graphics (SVGs). 
Our core insight is that by enriching the native SVG structure with coherent semantic anchors, VLMs can reason about meaningful parts and reliably generate targeted motion.
The foundation of our approach is a multi-view statistical inference mechanism utilizing the Dawid-Skene model, which effectively transforms noisy, weak predictions from a VLM into robust, high-confidence semantic labels, eliminating the need for extensive, domain-specific VLM fine-tuning.
Through rigorous quantitative and qualitative evaluations, we demonstrated that our method achieves unmatched improvements in animation quality and instruction fidelity, surpassing both existing vector animation techniques and state-of-the-art raster video generation models. 

We believe that bridging the semantic-syntactic gap is a vital, generalizable step for unlocking the full potential of VLMs across various symbolic domains. 
Whether for vector graphics or for 3D assets and scenes, methods that align human semantic intent with machine-readable structure will significantly broaden the capabilities of language models, transforming them from passive code generators into robust, context-aware animation and design agents.

{
    \small
    \bibliographystyle{ieeenat_fullname}
    \bibliography{main}
}

\clearpage
\setcounter{section}{0}
\setcounter{page}{1}
\renewcommand*{\thesection}{\Alph{section}}
\onecolumn
{\newpage\centering\Large\textbf{\thetitle}\\
\vspace{0.5em}Supplementary Material \\
\vspace{1.0em}
}

\begin{center}
{\Large \textbf{Contents}}\\[1.2em]
\end{center}

\renewcommand{\cftsecfont}{\Large}
\renewcommand{\cftsecpagefont}{\Large}
\renewcommand{\cftsubsecfont}{\large}
\renewcommand{\cftsubsecpagefont}{\large}

\renewcommand{\cftsecleader}{\cftdotfill{\cftdotsep}}
\renewcommand{\cftsubsecleader}{\cftdotfill{\cftdotsep}}

\renewcommand{\cftbeforesecskip}{3cm}
\renewcommand{\cftbeforesubsecskip}{2cm}

\setlength{\cftsecindent}{0pt}
\setlength{\cftsecnumwidth}{3.2em}

\setlength{\cftsubsecindent}{3.2em}
\setlength{\cftsubsecnumwidth}{4em}

\begin{spacing}{1.4}
\begin{flushleft}

\noindent\Large\textbf{A}\quad \hyperref[anchor-extended]{Extended Related Work} \dotfill \pageref{anchor-extended}
\vspace{+0.8cm}

\noindent\Large\textbf{B}\quad \hyperref[anchor-confidence]{Confidence Bounds for Reliability Estimation} \dotfill \pageref{anchor-confidence}
\vspace{+0.1cm}

\newcommand{\tocsub}[3]{%
  \noindent\hspace*{3.2em}\mbox{\large\textbf{#1}\quad #2}%
  \dotfill #3
}

\tocsub{B.1}{\hyperref[anchor-agree]{Decomposing Agreements and Rank-One Structure}}{\pageref{anchor-agree}}
\vspace{+0.1cm}

\tocsub{B.2}{\hyperref[anchor-spectral]{Reliability Estimation via Eigenvector Analysis}}{\pageref{anchor-spectral}}
\vspace{+0.1cm}

\tocsub{B.3}{\hyperref[anchor-bounds]{Bayes Rule vs. Majority Voting: Error Bounds}}{\pageref{anchor-bounds}}
\vspace{+0.8cm}

\noindent\Large\textbf{C}\quad \hyperref[anchor-gpt]{GPT-Human Alignment on Video–Text Alignment} \dotfill \pageref{anchor-gpt}
\vspace{+0.8cm}

\noindent\Large\textbf{D}\quad \hyperref[anchor-dataset]{Dataset Composition and Coverage} \dotfill \pageref{anchor-dataset}
\vspace{+0.8cm}

\noindent\Large\textbf{E}\quad \hyperref[anchor-prompts]{VLM Prompts for Planning and Animation Generation} \dotfill \pageref{anchor-prompts}
\vspace{+0.8cm}

\noindent\Large\textbf{F}\quad \hyperref[anchor-restruct]{Restructuring SVG Files with Semantic Labels} \dotfill \pageref{anchor-restruct}
\vspace{+6cm}

\end{flushleft}
\end{spacing}

\hrule
\vspace{+0.2cm}


\clearpage

\section{Extended Related Work}
\phantomsection
\label{anchor-extended}

\subsection{Vector graphics generation and decomposition}
Beyond animation, the broader field of vector graphics generation has mainly focused on vectorizing raster images. 
DeepSVG~\cite{deepsvg} introduced a hierarchical generative network that jointly models both the structure and appearance of vector graphics, enabling controllable generation through learned latent representations.
More recently, LayerPeeler~\cite{layerpeeler} proposed an autoregressive approach to decompose raster images into layered vector representations, demonstrating that careful layer-wise decomposition can produce more interpretable and editable vector graphics.
Yuan~\etal~\cite{yuan2025rdtf} extended these ideas to animated stickers, introducing a resource-efficient dual-mask training framework that generates multi-frame animations while maintaining computational efficiency.
While our method assumes a user-provided SVG as input, it can be seamlessly combined with image-to-SVG or text-to-SVG models to synthesize SVG animations directly from images or text, eliminating the need for manually authored SVGs.

\subsection{Domain-specific SVG understanding}
The challenge of understanding structured visual representations extends to specialized domains such as data visualization, such as graphs and charts. 
Chen~\etal~\cite{chen2023mystique} developed Mystique, a system for deconstructing SVG charts to enable layout reuse, demonstrating that reverse-engineering the semantic structure of charts requires domain-specific parsing strategies.
Building on this, VisAnatomy~\cite{chen2024visanatomy} provided a large-scale corpus of SVG charts with fine-grained semantic labels, establishing benchmarks for chart understanding tasks.
These works highlight that even within the SVG domain, different application areas (charts vs. illustrations vs. icons) require tailored approaches to semantic understanding.
Our framework focuses on general-purpose illustrations and icons, where semantic parts correspond to visual objects rather than data encodings.

\subsection{Language models for design tasks}
Recent work has explored leveraging large language models for various design tasks beyond vector graphics.
LayoutPrompter~\cite{lin2023layoutprompter} demonstrated that LLMs can be awakened to perform layout design through carefully crafted prompts that encode spatial relationships and design principles.
PosterO~\cite{hsu2025postero} extended this to content-aware layout generation by structuring layout trees in a way that enables language models to reason about hierarchical spatial arrangements.
These works share our core insight that restructuring visual representations to align with how language models process information is crucial for enabling reliable generation.
Taken together, these insights suggest that LLMs are not inherently incapable of design~\cite{yang2025structeval}, and rather, their potential emerges when design representations are aligned with the natural language structures they are trained to process.

\section{Confidence Bounds for Reliability Estimation}
\phantomsection
\label{anchor-confidence}

In this section, we provide the formal justification for the statistical inference framework described in Section 3.3. 
We first show how to derive the underlying reliabilities from pairwise agreement (\ref{supp:sec-agree}, \ref{supp:sec-spectral}) and then prove that using these reliabilities in a Bayes-weighted vote is provably superior to a standard majority vote (\ref{supp:sec-bounds}).

\subsection{Decomposing agreements and deriving the rank-one structure}
\label{supp:sec-agree}
\phantomsection
\label{anchor-agree}

The goal here is to formalize the relationship between the observable quantity (the agreement patterns $A_{ij}$) and the hidden quantity we care about (the individual reliability of each method, $p_i$).

\begin{lemma}[Agreement] 
Under the symmetric Dawid-Skene model,
$\mA_{ij}=p_ip_j+\frac{(1-p_i)(1-p_j)}{k-1}
=\frac{1}{k}+\frac{k}{k-1}\delta_i\delta_j$ for $i\neq j$, with $\bm\delta_i=p_i-\tfrac{1}{k}$.
\end{lemma}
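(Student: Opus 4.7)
The plan is to compute $\Pr[s_i = s_j]$ by decomposing the agreement event according to whether both responses are correct or both are incorrect but coincide on the same wrong label, then simplify the resulting expression. Since the Dawid-Skene model treats the two annotators as conditionally independent given the true label $y$, the joint probability of any pair $(s_i, s_j)$ factors, which is the essential ingredient.

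First I would condition on the true label $y$ and write
\begin{equation*}
\Pr[s_i = s_j \mid y] = \sum_{\ell \in \mathcal{Y}} \Pr[s_i = \ell \mid y]\,\Pr[s_j = \ell \mid y].
\end{equation*}
Under the symmetric model, the term with $\ell = y$ contributes $p_i p_j$, while each of the $k-1$ terms with $\ell \ne y$ contributes $\tfrac{1-p_i}{k-1} \cdot \tfrac{1-p_j}{k-1}$. Summing gives $p_i p_j + \tfrac{(1-p_i)(1-p_j)}{k-1}$, and because this expression no longer depends on $y$, marginalizing over the prior on $y$ yields the first identity for $\mA_{ij}$.

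For the second identity, I would substitute $p_i = \tfrac{1}{k} + \delta_i$ and $1 - p_i = \tfrac{k-1}{k} - \delta_i$, expanding both products. The $\tfrac{1}{k^2}$ terms from $p_i p_j$ and the $\tfrac{(k-1)^2}{k^2(k-1)} = \tfrac{k-1}{k^2}$ term from the incorrect-agreement part combine to $\tfrac{1}{k}$, the cross terms linear in $\delta_i$ or $\delta_j$ cancel against their counterparts from the other product, and the remaining quadratic terms $\delta_i \delta_j$ and $\tfrac{\delta_i \delta_j}{k-1}$ combine to $\tfrac{k}{k-1}\delta_i \delta_j$.

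No step here is genuinely hard; the only mild obstacle is bookkeeping in the cancellation, and the cleanest way to avoid algebraic clutter is to handle $p_i p_j$ and the incorrect-agreement sum separately and then add their expansions term by term. The restriction $i \ne j$ is used implicitly when splitting the sum into the correct and incorrect pieces, since for $i = j$ one has $s_i = s_j$ trivially and the factorization step no longer applies.
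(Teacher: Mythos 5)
Your proof is correct and follows essentially the same route as the paper's: decompose the agreement event into ``both correct'' versus ``both wrong on the same label,'' then substitute $p_i = \tfrac{1}{k}+\delta_i$ and simplify. You are somewhat more explicit than the paper in invoking conditional independence given $y$ and in carrying out the cancellation, but the underlying argument is identical.
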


\begin{proof}
The probability of agreement $\mA_{ij} = \Pr[s_i=s_j]$ is the sum of two mutually exclusive cases. 
One case where both methods are correct ($p_i p_j$), and another case where both methods are incorrect but agree on the same wrong label, which sums to $\frac{(1-p_i)(1-p_j)}{k-1}$. 
The second expression is obtained by substituting $p_i = \frac{1}{k} + \delta_i$ and $1-p_i = \frac{k-1}{k} - \delta_i$ into the first expression and simplifying the resulting algebra. 
\end{proof}

\begin{proposition}[Rank one] 
Let $\mB_{ij}=\mA_{ij}-\tfrac{1}{k}$ ($i\neq j$) and $\mB_{ii}=0$.
Then $\mathbb{E}[\mB]=\tfrac{k}{k-1}\bm\delta\bm\delta^\top$ on off-diagonals (rank one).
\end{proposition}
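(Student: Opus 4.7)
The plan is to apply the preceding Agreement Lemma entrywise and finish with linearity of expectation. Because $\mB$ is defined by centering the agreement matrix $\mA$ (or its empirical estimate $\hat{\mA}$), the proposition reduces to an algebraic identity once the closed-form expression for $\mA_{ij}$ from the Agreement Lemma is substituted in.

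First I would handle the off-diagonal entries. For $i \neq j$, linearity gives $\mathbb{E}[\mB_{ij}] = \mathbb{E}[\hat{\mA}_{ij}] - \tfrac{1}{k}$, and $\mathbb{E}[\hat{\mA}_{ij}] = \mA_{ij}$ because $\hat{\mA}_{ij}$ averages i.i.d.\ agreement indicators whose mean is, by the Agreement Lemma, exactly $\tfrac{1}{k} + \tfrac{k}{k-1}\delta_i\delta_j$. The $\tfrac{1}{k}$ term cancels against the centering, leaving the $(i,j)$ entry of $\tfrac{k}{k-1}\bm\delta\bm\delta^\top$. The diagonal entries are $0$ by the definition $\mB_{ii}=0$, reflecting that a primitive trivially agrees with itself and carries no information about $p_i$; this is why the claimed equality is restricted to off-diagonals.

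The only subtlety is interpretive rather than technical. I would be careful to stress that ``rank one'' describes the structure of the off-diagonal part, not the full matrix $\mathbb{E}[\mB]$, since zeroing the diagonal of a generic rank-one matrix raises its rank. So $\mathbb{E}[\mB]$ is not literally a rank-one operator; it merely agrees with one off the diagonal. I would flag this explicitly, because the subsequent spectral recovery of $\bm\delta$ from the top eigenpair will treat the zeroed diagonal of $\tfrac{k}{k-1}\bm\delta\bm\delta^\top$ as a small perturbation of the true outer product, and the qualifier ``on off-diagonals'' is exactly the content that this later step exploits.
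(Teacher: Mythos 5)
Your proof is correct and follows essentially the same route as the paper's: substitute the Agreement Lemma's closed form for $\mA_{ij}$ into the centered entries, cancel the chance term $\tfrac{1}{k}$, and identify the remainder as the $(i,j)$ entry of $\tfrac{k}{k-1}\bm\delta\bm\delta^\top$. Your two added clarifications --- that $\mathbb{E}[\hat{\mA}_{ij}]=\mA_{ij}$ justifies taking expectations of the empirical matrix, and that ``rank one'' holds only for the off-diagonal part since zeroing the diagonal of an outer product generally raises the rank --- are refinements the paper glosses over, not a different argument.
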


\begin{proof}
By definition, the centered agreement matrix entry is $\mB_{ij} = \mA_{ij} - \frac{1}{k}$ for $i \ne j$.
Substituting the result from Lemma S.1 for $\mA_{ij}$:
$$ \mB_{ij} = \left( \frac{1}{k} + \frac{k}{k-1}\delta_i\delta_j \right) - \frac{1}{k} = \frac{k}{k-1}\delta_i\delta_j $$
This is the $(i, j)$-th entry of the matrix $\frac{k}{k-1}\bm\delta\bm\delta^\top$. Since $\bm\delta\bm\delta^\top$ is the outer product of the vector $\bm\delta$ with itself, its rank is one, assuming $\bm\delta$ is non-zero.
\end{proof}

\subsection{Reliability estimation via eigenvector analysis}
\label{supp:sec-spectral}
\phantomsection
\label{anchor-spectral}

This is the ``recovery'' part of our proof. 
Now that we have established a link between agreement and skill ($\mB$ and $\bm\delta$), we need to show we can actually solve for $\bm\delta$. 
Theorem S.3 confirms that we can exploit this rank-one structure to reliably estimate the skill vector $\bm\delta$ from our empirical data $\hat{\mB}$ using standard linear algebra techniques, which is finding the top eigenvector.

\begin{theorem}[Quality Guarantee for Estimated Skill]
Let $\hat \mB \in \R^{M\times M}$ be the empirical centered agreement matrix built from $n$ cases and $M$ rendering methods.
If each pairwise agreement is estimated within $\pm\varepsilon$, then with probability $\ge 1-\eta$,
\[
\left\|\hat{\bm\delta} - c\,\bm\delta\right\|_2 \ \le\ C\left(\sqrt{\frac{M}{n}}+\varepsilon\right),
\]
for some confidence bound $\eta$, scale $c>0$, and constant $C$ depending only on $k$.
Furthermore, $c=\sqrt{\lambda_1(k-1)/k}$ where $\lambda_1$ is the top eigenvalue of $\hat B$.%
\footnote{The underlying mathematics, based on the Davis-Kahan theorem, provides a strong upper limit on the error between our calculated skill vector ($\hat{\mathbf{\delta}}$) and the true skill vector ($\mathbf{\delta}$). This error is confirmed to decrease as we process more SVG primitives ($n$).}
\end{theorem}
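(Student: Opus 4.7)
The plan is to combine a concentration bound for the empirical centered agreement matrix with a Davis--Kahan style perturbation argument for its leading eigenpair. First, I would observe that each empirical agreement $\hat \mA_{ij}$ is the mean of $n$ independent Bernoulli indicators $\mathbf{1}[s_i(\vx)=s_j(\vx)]$. Applying Hoeffding's inequality entrywise together with a union bound over the $O(M^2)$ pairs gives, with probability at least $1-\eta$, the uniform entrywise bound $|\hat \mA_{ij}-\mA_{ij}|\le \varepsilon_0$ for $\varepsilon_0 = O(\sqrt{\log(M^2/\eta)/n})$. Since $\hat \mB - \mathbb{E}[\mB]$ differs from $\hat\mA-\mathbb{E}[\mA]$ only on the diagonal, it inherits this control; a matrix Bernstein inequality then upgrades it to an operator-norm bound of order $\sqrt{M/n}+\varepsilon$, which supplies both terms in the claimed rate (the $\varepsilon$ from the assumed deterministic bound in the hypothesis and $\sqrt{M/n}$ from the intrinsic fluctuations).

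Next, I would exploit the rank-one structure from Proposition S.2. Up to a vanishing $O(1/(k-1))$ diagonal correction, $\mathbb{E}[\mB]=\tfrac{k}{k-1}\bm\delta\bm\delta^\top$, so its leading eigenvector is $\vv^\star=\bm\delta/\|\bm\delta\|_2$, its leading eigenvalue is $\lambda_1^\star=\tfrac{k}{k-1}\|\bm\delta\|_2^2$, and every subsequent eigenvalue is essentially zero. The Davis--Kahan $\sin\Theta$ theorem therefore gives
\begin{equation*}
\|\hat\vv-\vv^\star\|_2 \;\le\; \frac{2\sqrt{2}\,\|\hat\mB-\mathbb{E}[\mB]\|_2}{\lambda_1^\star},
\end{equation*}
once the sign ambiguity is removed by the convention $\sum_i \hat\delta_i\ge 0$. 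In parallel, Weyl's inequality yields $|\hat\lambda_1-\lambda_1^\star|\le \|\hat\mB-\mathbb{E}[\mB]\|_2$, so the scalar $\sqrt{\hat\lambda_1(k-1)/k}$ lies within the same order of $\|\bm\delta\|_2$.

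These two perturbations combine through the triangle inequality applied to $\hat{\bm\delta}=\sqrt{\hat\lambda_1(k-1)/k}\,\hat\vv$ and the scaled target $c\,\bm\delta$, where the eigenvector error is multiplied by $\sqrt{\hat\lambda_1}$ and the eigenvalue error by $\|\vv^\star\|_2=1$; both contributions collapse into a single $C(\sqrt{M/n}+\varepsilon)$ bound, with the constant $C$ depending on $k$ and on the minimum skill norm. The main obstacle is precisely the Davis--Kahan denominator: $\lambda_1^\star$ scales as $\|\bm\delta\|_2^2$, so the argument degrades if the collection of renderings collectively drifts toward pure chance. A clean statement therefore requires a mild identifiability assumption, namely that $\|\bm\delta\|_2$ is bounded away from zero, which is the standard non-degeneracy condition for the Dawid--Skene model and matches the practical observation that at least a few rendering methods are reliably informative. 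A secondary subtlety is the sign convention: the leading eigenvector is only identified up to $\pm 1$, so the stated bound should be read after applying the sign-fixing rule from Section 3.3, which is precisely why the error is measured against $c\,\bm\delta$ rather than $\bm\delta$ alone.
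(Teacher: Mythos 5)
Your proposal is correct in approach and, notably, does substantially more than the paper's own proof. What the paper prints as the proof of this theorem is essentially a verbatim copy of the proof of Proposition S.2: it re-derives $\mathbb{E}[\mB]=\tfrac{k}{k-1}\bm\delta\bm\delta^\top$ on the off-diagonals and then asserts that this rank-one structure ``enables recovery''; the Davis--Kahan theorem is mentioned only in a footnote, and no concentration bound, no eigenvector/eigenvalue perturbation argument, and no derivation of the $\sqrt{M/n}+\varepsilon$ rate appears anywhere in the paper. Your proposal supplies exactly the missing argument, along the standard route: entrywise Hoeffding with a union bound (upgraded by matrix Bernstein) to control $\|\hat\mB-\mathbb{E}[\mB]\|_2$ at order $\sqrt{M/n}+\varepsilon$; Davis--Kahan against the rank-one population matrix, whose spectral gap equals $\lambda_1^\star=\tfrac{k}{k-1}\|\bm\delta\|_2^2$, to control the eigenvector; Weyl's inequality for the eigenvalue; and a triangle inequality to assemble the bound on $\hat{\bm\delta}=\sqrt{\hat\lambda_1(k-1)/k}\,\hat\vv$. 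Your further observation that the constant cannot depend ``only on $k$'' is a genuine correction to the theorem as stated: because the Davis--Kahan denominator scales as $\|\bm\delta\|_2^2$, the bound necessarily degrades as the ensemble of rendering methods drifts toward chance, so a non-degeneracy condition on $\|\bm\delta\|_2$ is required and $C$ must depend on it.

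One detail on your side deserves tightening: the diagonal correction incurred by forcing $\mB_{ii}=0$ is not a ``vanishing $O(1/(k-1))$'' term. It is a deterministic bias of operator norm $\tfrac{k}{k-1}\max_i\delta_i^2$ that does not shrink with $n$; relative to the spectral gap it is of size $\max_i\delta_i^2/\|\bm\delta\|_2^2$, which is small when skill is spread across many methods but approaches $1$ when a single method carries essentially all of the signal. A complete proof should either absorb this term explicitly into the perturbation budget under a stated spread condition, or work with the hollow (diagonal-deleted) matrix throughout and invoke a Davis--Kahan variant for that setting. This is a refinement of your argument, not a flaw in its architecture; the paper's own proof never reaches the point where the issue would arise.
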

\begin{proof}
By definition, the centered agreement matrix entry is $\mB_{ij} = \mA_{ij} - \frac{1}{k}$ for $i \ne j$.
Substituting the result from Lemma S.1 for $\mA_{ij}$:
$$ \mB_{ij} = \left( \frac{1}{k} + \frac{k}{k-1}\delta_i\delta_j \right) - \frac{1}{k} = \frac{k}{k-1}\delta_i\delta_j $$
This means the matrix $\mathbb{E}[\mB]$ is a constant factor $\frac{k}{k-1}$ times the matrix $\bm\delta\bm\delta^\top$. The outer product of any vector with itself, $\bm\delta\bm\delta^\top$, is mathematically a rank-one matrix. 
This rank-one property is critical because it means the vector $\bm\delta$ (our reliability or 'skill' vector) must be proportional to the dominant eigenvector of $\mB$, enabling its recovery in Theorem S.3.
\end{proof}

\subsection{Bayes decision rule and error bounds vs.\ Majority voting}
\label{supp:sec-bounds}
\phantomsection
\label{anchor-bounds}

This section proves that weighting VLM responses by their inferred reliability is \textbf{statistically superior} to simple majority voting. Corollary~\ref{cor:improvement-majority} shows that the Bayes-weighted method achieves a strictly better error exponent whenever VLM reliabilities differ.

\subsubsection{Setup and the log-likelihood ratio}

Fix the true label $y^\star \in \{1, 2, \ldots, k\}$ and any competitor label $y \ne y^\star$. For each method $i$, recall that $p_i$ is the probability of correct classification. Define:
\begin{itemize}
    \item $q_i \triangleq \frac{1-p_i}{k-1}$ (probability of any specific wrong label)
    \item $d_i \triangleq p_i - q_i = \frac{kp_i - 1}{k-1}$ (discrimination parameter)
    \item $w_i = \log\frac{p_i}{q_i} = \log\frac{(k-1)p_i}{1-p_i}$ (Bayes weight, or log-likelihood-ratio)
\end{itemize}

\vspace{+0.1cm}
For each observation $s_i$ (method $i$'s output), define the log-likelihood ratio:
\[
Z_i = w_i \cdot \mathbf{1}[s_i = y^\star] - w_i \cdot \mathbf{1}[s_i = y] = \begin{cases}
+w_i & \text{if } s_i = y^\star \\
-w_i & \text{if } s_i = y \\
0 & \text{otherwise}
\end{cases}
\]

The Bayes decision rule prefers $y^\star$ over $y$ when $\sum_i Z_i > 0$. An error occurs when $\sum_i Z_i \le 0$ despite $y^\star$ being true.

\begin{lemma}[Properties of $Z_i$]
Under the true label $y^\star$: (1) $Z_i \in [-w_i, w_i]$, (2) $\mathbb{E}[Z_i \mid y^\star] = w_i d_i$, and (3) the $Z_i$ are independent across methods.
\end{lemma}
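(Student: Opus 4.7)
The plan is to verify the three properties by direct computation from the definition of $Z_i$ and the assumptions of the symmetric Dawid--Skene model. No heavy machinery is needed here; the role of the lemma is to package exactly the three ingredients (boundedness, known mean, conditional independence) that a Hoeffding-type bound on $\sum_i Z_i$ will require later in the error analysis.

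For property (1), I would simply read off the three-case definition of $Z_i$. Since $y \ne y^\star$, the indicators $\mathbf{1}[s_i = y^\star]$ and $\mathbf{1}[s_i = y]$ are mutually exclusive, so $Z_i$ takes only the values $+w_i$, $-w_i$, or $0$, all of which lie in $[-w_i, w_i]$.

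For property (2), I would condition on $y^\star$ and substitute the Dawid--Skene probabilities: $\Pr[s_i = y^\star \mid y^\star] = p_i$ and $\Pr[s_i = y \mid y^\star] = q_i = (1 - p_i)/(k-1)$ for the specific competitor label $y$. Linearity of expectation then gives $\mathbb{E}[Z_i \mid y^\star] = w_i p_i - w_i q_i = w_i(p_i - q_i) = w_i d_i$ by the definition of $d_i$.

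For property (3), I would invoke the conditional-independence assumption built into the Dawid--Skene model: given the true label, the outputs $s_i$ are mutually independent across methods. Since each $Z_i$ is a deterministic function of the single variable $s_i$, this independence transfers directly to the $Z_i$. The only real subtlety is to make explicit that the phrase ``independent across methods'' should be read as independence \emph{conditional on $y^\star$}, which is precisely the form needed when a concentration inequality is applied to $\sum_i Z_i$ in the downstream error bound. Beyond this bookkeeping point I do not anticipate any obstacle --- the lemma is essentially a restatement of the model.
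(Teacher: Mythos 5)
Your proposal is correct and follows essentially the same route as the paper's proof: boundedness read off from the three-case definition, the mean computed via $p_i w_i - q_i w_i = w_i d_i$, and independence inherited from the Dawid--Skene conditional-independence assumption since each $Z_i$ is a function of $s_i$ alone. Your explicit remark that the independence is \emph{conditional on $y^\star$} is a useful clarification that the paper leaves implicit, but it does not change the argument.
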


\begin{proof}
Boundedness is immediate from the definition. For the expected value, given $y^\star$, method $i$ outputs $y^\star$ with probability $p_i$ (giving $Z_i = w_i$) and outputs $y$ with probability $q_i$ (giving $Z_i = -w_i$). Thus:
\[
\mathbb{E}[Z_i \mid y^\star] = p_i \cdot w_i + q_i \cdot (-w_i) = w_i(p_i - q_i) = w_i d_i.
\]
Independence follows from the conditional independence assumption in the Dawid-Skene model.
\end{proof}

\subsubsection{Error bound for Bayes decision rule}

\begin{theorem}[Hoeffding bound for Bayes LLR]
\label{thm:hoeffding-bayes}
\[
\Pr\!\left[\sum_{i=1}^m Z_i \le 0 \,\bigg|\, y^\star\right]
\ \le\
\exp\!\left(
-\frac{\left(\sum_{i=1}^m w_i d_i\right)^2}{2\sum_{i=1}^m w_i^2}
\right).
\]
\end{theorem}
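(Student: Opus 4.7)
The plan is to recognize the inequality as a direct consequence of the one-sided Hoeffding bound applied to the conditionally independent, bounded summands $Z_i$. The preceding lemma already supplies the three ingredients Hoeffding requires: independence of $\{Z_i\}_{i=1}^m$ conditional on $y^\star$, the almost-sure containment $Z_i \in [-w_i, w_i]$ (so each summand has range exactly $2w_i$), and the conditional mean $\mathbb{E}[Z_i \mid y^\star] = w_i d_i$. No additional probabilistic machinery will be needed.

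First, I would write $S = \sum_{i=1}^m Z_i$ and $\mu = \mathbb{E}[S \mid y^\star] = \sum_{i=1}^m w_i d_i$. Under the standing assumption that each classifier beats chance, i.e.\ $p_i > 1/k$, we have $d_i = (kp_i - 1)/(k-1) > 0$ and $w_i = \log\bigl((k-1)p_i/(1-p_i)\bigr) > 0$, hence $\mu > 0$. This lets me rewrite the error event as a genuine lower-tail deviation $\{S - \mu \le -\mu\}$, which is exactly the form Hoeffding controls.

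Next, I would invoke the standard Hoeffding inequality: for independent $X_i$ with $X_i \in [a_i, b_i]$,
\[
\Pr\!\left[\sum_i X_i - \mathbb{E}\sum_i X_i \le -t\right] \le \exp\!\left(-\frac{2t^2}{\sum_i (b_i - a_i)^2}\right).
\]
Instantiating with $X_i = Z_i$, $b_i - a_i = 2w_i$, and $t = \mu$ yields
\[
\Pr[S \le 0 \mid y^\star] \le \exp\!\left(-\frac{2\mu^2}{\sum_i 4 w_i^2}\right) = \exp\!\left(-\frac{\mu^2}{2\sum_i w_i^2}\right),
\]
which matches the claim after substituting $\mu = \sum_i w_i d_i$.

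The only conceptual point worth double-checking is that the $Z_i$ are genuinely \emph{independent} (not merely uncorrelated), since Hoeffding breaks without this. Independence follows because the Dawid--Skene model assumes the classifier outputs $s_i$ are conditionally independent given $y^\star$, and each $Z_i$ is a deterministic function of $s_i$ alone. I therefore do not expect a genuine obstacle: the proof reduces to bookkeeping around Hoeffding, with the only care needed being to invoke the one-sided rather than the two-sided form and to track the factor of two in the range of each $Z_i$.
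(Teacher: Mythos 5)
Your proposal is correct and follows essentially the same route as the paper's proof: both apply the one-sided Hoeffding inequality to the conditionally independent summands $Z_i \in [-w_i, w_i]$ (range $2w_i$), center by $\mathbb{E}[Z_i \mid y^\star] = w_i d_i$, and set the deviation $t = \sum_i w_i d_i$ to obtain the stated exponent. Your extra remark on positivity of the mean is a nice touch the paper omits (and in fact $w_i d_i \ge 0$ holds automatically since $w_i$ and $d_i$ always share the same sign), but it does not change the argument.
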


\begin{proof}
We apply Hoeffding's inequality for bounded independent random variables. Since $Z_i \in [-w_i, w_i]$ with range $(b_i - a_i) = 2w_i$, Hoeffding's inequality gives:
\[
\Pr\!\left[\sum_{i=1}^m (Z_i - \mathbb{E}[Z_i]) \le -t\right] \le \exp\!\left( -\frac{2t^2}{\sum_{i=1}^m 4w_i^2} \right).
\]
The error event $\sum Z_i \le 0$ can be rewritten as $\sum (Z_i - \mathbb{E}[Z_i]) \le -\sum \mathbb{E}[Z_i]$. Setting $t = \sum_{i=1}^m w_i d_i = \sum_{i=1}^m \mathbb{E}[Z_i]$ and substituting:
\[
\Pr\!\left[\sum_{i=1}^m Z_i \le 0\right] \le \exp\!\left( -\frac{2(\sum_{i=1}^m w_i d_i)^2}{4\sum_{i=1}^m w_i^2} \right) = \exp\!\left( -\frac{(\sum_{i=1}^m w_i d_i)^2}{2\sum_{i=1}^m w_i^2} \right).
\]
\end{proof}

The exponent $\frac{(\sum w_i d_i)^2}{2\sum w_i^2}$ quantifies how fast the error probability decays. Larger exponents mean exponentially smaller error rates.

\subsubsection{Comparison with majority voting and proof of superiority}

Majority voting uses uniform weights $w_i^{\text{MV}} \equiv 1$, yielding error exponent $\frac{(\sum_i d_i)^2}{2m}$. We now show that Bayes weighting is strictly better when reliabilities differ.

\begin{theorem}[Improvement over majority voting]
\label{cor:improvement-majority}
In the small-error regime where $|p_i - \frac{1}{k}| \ll 1$, the approximations $w_i \approx \frac{k^2}{k-1}\delta_i$ and $d_i \approx \frac{k}{k-1}\delta_i$ (where $\delta_i = p_i - \frac{1}{k}$) imply $w_i \approx k d_i$. The Bayes error exponent then satisfies:
\[
\text{Exponent}_{\text{BV}} \approx \frac{1}{2}\sum_{i=1}^m d_i^2 = \frac{m}{2}\left[(\text{Mean}(d))^2 + \text{Var}(d)\right] \ge \text{Exponent}_{\text{MV}} = \frac{(\sum d_i)^2}{2m},
\]
with equality if and only if all $d_i$ are equal. The improvement factor is
\[
\frac{\text{Exponent}_{\text{BV}}}{\text{Exponent}_{\text{MV}}} \approx 1 + \frac{\text{Var}(d)}{(\text{Mean}(d))^2},
\]
quantifying the benefit of exploiting heterogeneity in method reliabilities.
\end{theorem}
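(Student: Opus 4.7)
The plan is to carry out three short computations: derive the stated small-error expansions of $w_i$ and $d_i$, substitute $w_i \approx k\,d_i$ into the Bayes error exponent from Theorem~\ref{thm:hoeffding-bayes} and simplify, and then compare the resulting expression to the majority-voting exponent via Cauchy--Schwarz.

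First I would parametrize $p_i = \tfrac{1}{k}+\delta_i$, which forces $q_i = \tfrac{1}{k}-\tfrac{\delta_i}{k-1}$ through the identity $q_i = (1-p_i)/(k-1)$. A direct subtraction gives $d_i = p_i - q_i = \tfrac{k}{k-1}\delta_i$ exactly, and a first-order Taylor expansion of $\log p_i$ and $\log q_i$ around $\log(1/k)$ yields $w_i = \log(p_i/q_i) \approx k\delta_i + \tfrac{k\delta_i}{k-1} = \tfrac{k^2}{k-1}\delta_i$, which is $k\,d_i$. Substituting $w_i \approx k\,d_i$ into $(\sum_i w_i d_i)^2 / (2\sum_i w_i^2)$ collapses the ratio: the numerator becomes $k^2(\sum_i d_i^2)^2$, the denominator $2k^2 \sum_i d_i^2$, leaving $\tfrac{1}{2}\sum_i d_i^2$. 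The identity $\sum_i d_i^2 = m\,[\text{Mean}(d)^2 + \text{Var}(d)]$ is the standard bias--variance decomposition of a sample second moment and gives the claimed closed form.

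For the final comparison, Cauchy--Schwarz applied to the vectors $(d_i)_{i=1}^m$ and $(1,\dots,1)$ gives $(\sum_i d_i)^2 \le m\sum_i d_i^2$, which is precisely $\text{Exponent}_{\text{MV}} \le \text{Exponent}_{\text{BV}}$ after dividing by $2m$. Equality in Cauchy--Schwarz occurs iff $(d_i)$ is proportional to $(1,\dots,1)$, i.e., all $d_i$ are equal, which is the claimed equality condition. Dividing the two exponents then yields the improvement factor $\text{Exponent}_{\text{BV}}/\text{Exponent}_{\text{MV}} \approx (\text{Mean}(d)^2 + \text{Var}(d))/\text{Mean}(d)^2 = 1 + \text{Var}(d)/\text{Mean}(d)^2$.

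The main obstacle is cosmetic rather than technical: everything reduces to a Taylor expansion and one application of Cauchy--Schwarz, but the Taylor step for $w_i$ carries $O(\delta_i^2)$ corrections that propagate into the exponent. Consequently, the inequality $\text{Exponent}_{\text{BV}} \ge \text{Exponent}_{\text{MV}}$ is an asymptotic statement valid in the small-error regime rather than an exact one; an exact version would require either bounding these higher-order terms uniformly in $\|\bm\delta\|_\infty$ or appealing directly to the optimality of the Bayes rule among all linear decision rules under the Dawid--Skene model. I would include the latter observation as a remark but keep the proof itself in the first-order form that matches how the theorem is stated.
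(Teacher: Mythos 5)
Your proposal is correct and follows essentially the same route as the paper's proof: the same Taylor expansion giving $w_i \approx k d_i$, the same substitution collapsing the Bayes exponent to $\tfrac{1}{2}\sum_i d_i^2$, and the same mean--variance decomposition. Your use of Cauchy--Schwarz for the final inequality is only cosmetically different from the paper's observation that $\mathrm{Exponent}_{\mathrm{BV}} - \mathrm{Exponent}_{\mathrm{MV}} \approx \tfrac{m}{2}\mathrm{Var}(d) \ge 0$, since $m\sum_i d_i^2 - \left(\sum_i d_i\right)^2 = m^2\,\mathrm{Var}(d)$ makes the two statements identical; your closing remark that the inequality is only asymptotic in the small-error regime is a fair caveat the paper leaves implicit.
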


\begin{proof}
First, we derive the weight approximation. For $p_i = \frac{1}{k} + \delta_i$ with small $\delta_i$:
\[
w_i = \log\frac{(k-1)p_i}{1-p_i} = \log\frac{(k-1)(\frac{1}{k} + \delta_i)}{\frac{k-1}{k} - \delta_i} = \log\frac{1 + k\delta_i}{1 - \frac{k\delta_i}{k-1}}.
\]
Using Taylor expansions $\log(1+x) \approx x$ and $(1-y)^{-1} \approx 1+y$:
\[
w_i \approx k\delta_i + \frac{k\delta_i}{k-1} = \frac{k^2\delta_i}{k-1}.
\]
Similarly, $d_i = \frac{kp_i-1}{k-1} = \frac{k\delta_i}{k-1}$, so $w_i \approx k d_i$.
Now compute the Bayes exponent using $w_i \approx k d_i$:
\[
\text{Exponent}_{\text{BV}} = \frac{(\sum w_i d_i)^2}{2\sum w_i^2} \approx \frac{(k\sum d_i^2)^2}{2k^2\sum d_i^2} = \frac{\sum d_i^2}{2}.
\]
Using the variance decomposition $\sum d_i^2 = m(\text{Mean}(d))^2 + m\cdot\text{Var}(d)$:
\[
\text{Exponent}_{\text{BV}} \approx \frac{m}{2}\left[(\text{Mean}(d))^2 + \text{Var}(d)\right].
\]
For majority voting with $w_i = 1$:
\[
\text{Exponent}_{\text{MV}} = \frac{(\sum d_i)^2}{2m} = \frac{m^2(\text{Mean}(d))^2}{2m} = \frac{m(\text{Mean}(d))^2}{2}.
\]
The difference is $\text{Exponent}_{\text{BV}} - \text{Exponent}_{\text{MV}} \approx \frac{m \cdot \text{Var}(d)}{2} \ge 0$, with equality only when $\text{Var}(d) = 0$ (all $d_i$ equal). The improvement factor is:
\[
\frac{\text{Exponent}_{\text{BV}}}{\text{Exponent}_{\text{MV}}} = \frac{\frac{m}{2}[(\text{Mean}(d))^2 + \text{Var}(d)]}{\frac{m(\text{Mean}(d))^2}{2}} = 1 + \frac{\text{Var}(d)}{(\text{Mean}(d))^2}. \qedhere
\]
\end{proof}

\begin{remark}
The improvement factor $1 + \frac{\text{Var}(d)}{(\text{Mean}(d))^2}$ shows that Bayes weighting provides the most benefit when method reliabilities are heterogeneous. If all methods have identical reliability, both approaches are equivalent. The more diverse the reliabilities, the greater the advantage of properly weighting methods by their estimated skill.
\end{remark}

\clearpage

\section{GPT-Human Alignment on Video-Text Alignment}
\phantomsection
\label{anchor-gpt}

We assess the alignment between our user study and the GPT-T2V metric in order to validate the reliability of the GPT based evaluation. 
In particular, we compare pairwise preferences and measure how often the metric selects the same animation as human participants. 
We find that GPT's preferences (\ie, cases where GPT assigns a higher score to one animation than the other) agree with user preferences in \textbf{$83.4\%$} of the pairs, which indicates a strong correspondence between the automatic and human judgments. 
In comparison, the CLIP-T2V metric, which operates without any external API services, reaches only $53.4\%$ agreement with the user study responses. 
This substantial gap suggests that GPT-T2V captures human perceptual preferences much more faithfully and therefore provides a more reliable proxy for human evaluation in our setting.

We observe that state-of-the-art LLMs demonstrate a robust ability to interpret simple animations and reason about their motion. 
When guided by clear evaluation criteria, such as those provided in \Cref{fig:supp-gpt-prompt}, these models exhibit a high degree of alignment with human judgments. 
Although GPT-5 is also the model used to generate our animations, its role as an evaluator is fundamentally different. 
In practice, using LLMs as judges is often more straightforward and reliable than using them as generators, as evaluation requires consistency and comparative reasoning rather than creative synthesis.

\begin{figure}[h]
    \centering
    \includegraphics[width=\linewidth]{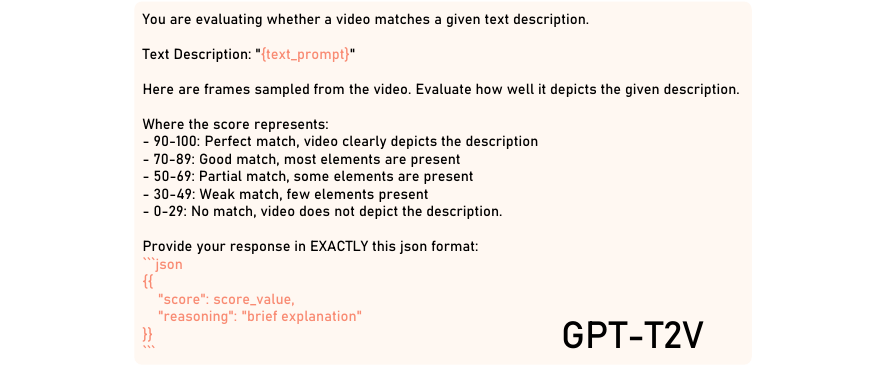}
    \caption{Prompt templated used for GPT-T2V evaluation. }
    \label{fig:supp-gpt-prompt}
\end{figure}

\clearpage

\section{Dataset Composition and Coverage}
\phantomsection
\label{anchor-dataset}

Our test dataset comprises 114 hand-crafted animation instructions across 57 unique SVG files, with each SVG file receiving an average of two distinct animation scenarios. 
These examples were meticulously designed to reflect the diverse animation needs encountered in modern web development. 
As shown in \Cref{tab:subject_theme}, our dataset spans six thematic categories, with particularly strong representation in Nature/Environment (31.6\%) and Objects/Miscellaneous (26.3\%), ensuring broad coverage of visual content types commonly found in web interfaces. 
From tech logos and brand animations to natural phenomena and user interface elements, our dataset encompasses the full spectrum of SVG animation use cases. 
Furthermore, \Cref{tab:interaction_pattern} demonstrates our intentional focus on varied interaction patterns, with Appearance/Reveal animations (28.1\%) and State Transition effects (13.2\%) representing critical components of modern web user experiences. 
The substantial presence of Organic/Natural Movement (12.3\%) and Rotational Movement (8.8\%) patterns reflects our commitment to including both subtle, life-like animations and dynamic, attention-grabbing effects. 
This careful curation ensures that our test dataset not only provides comprehensive coverage but also accurately represents the practical animation requirements of contemporary web applications, from loading indicators and state feedback to decorative enhancements and interactive storytelling.

\begin{table}[h]
\centering
\begin{minipage}{0.48\textwidth}
\centering
\caption{Distribution of Subject Themes in Test Dataset}
\label{tab:subject_theme}
\begin{tabular}{ccc}
\hline
\textbf{Subject Theme} & \textbf{Count} & \textbf{\%} \\
\hline
Nature/Environment & 36 & 31.6 \\
Objects/Miscellaneous & 30 & 26.3 \\
UI/Interface Elements & 18 & 15.8 \\
Tech Logos/Brands & 12 & 10.5 \\
Animals/Characters & 10 & 8.8 \\
Faces/Emojis & 8 & 7.0 \\
\hline
\textbf{Total} & \textbf{114} & \textbf{100.0} \\
\hline
\end{tabular}
\end{minipage}%
\hfill
\begin{minipage}{0.48\textwidth}
\centering
\caption{Distribution of Interaction Patterns in Test Dataset}
\label{tab:interaction_pattern}
\begin{tabular}{lcc}
\hline
\textbf{Interaction Pattern} & \textbf{Count} & \textbf{\%} \\
\hline
Other/Mixed & 43 & 37.7 \\
Appearance/Reveal & 32 & 28.1 \\
State Transition & 15 & 13.2 \\
Organic/Natural Movement & 14 & 12.3 \\
Rotational Movement & 10 & 8.8 \\
\hline
\textbf{Total} & \textbf{114} & \textbf{100.0} \\
\hline
\end{tabular}
\end{minipage}
\end{table}

\clearpage

\section{VLM Prompts for Planning and Animation Generation}
\phantomsection
\label{anchor-prompts}

Our animation pipeline relies on two complementary VLM prompts, one for planning and one for per-class animation generation.

The model is instructed to avoid generic SVG terms and to instead use intuitive, role-based identifiers, and to write a short, human-interpretable description of the intended motion of each part.
This prompt focuses entirely on \emph{semantic intent} and it does not require the model to understand SVG syntax, only to reason visually and symbolically about what should happen.

The second prompt is invoked once for each semantic class produced during restructuring.
It receives three ingredients, the restructured SVG, all previously generated CSS (so it can remain consistent), and the animation plan for that particular class.
Its role is purely \emph{syntactic} and translate one actor's high-level plan into concrete, production-safe CSS.
To avoid conflicts across iterative generations, the prompt enforces a strict ``lanes'' convention in which each motion component (translation, rotation, scale, opacity, blur, etc.) is expressed through typed CSS custom properties rather than direct transform declarations in keyframes.
A single composer rule per class then assembles these properties into the final transform.
This ensures that new animations never overwrite existing ones, allowing independent motions to compose reliably across multiple generation passes.

The two prompts divide responsibilities cleanly, the planner performs semantic reasoning, and the per-class generator performs structured code synthesis.
This separation avoids the common failure modes where a single prompt must juggle visual interpretation, HTML/SVG structure, and CSS constraints simultaneously.
The lanes system further guarantees that iterative code generation remains stable, that different motions do not collide, and that long CSS files can be produced incrementally without exceeding model context limits.

\begin{figure}[h]
    \centering
    \includegraphics[width=0.9\linewidth]{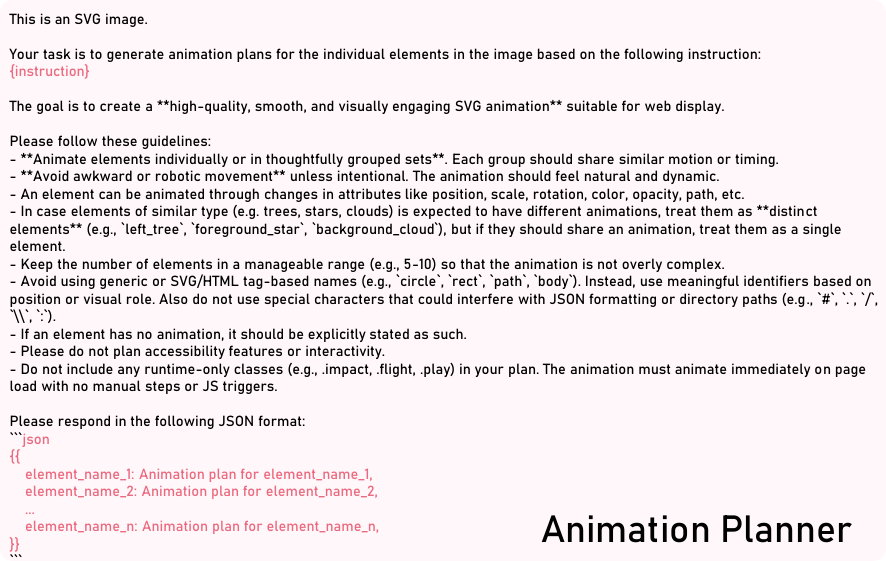}
    \caption{Prompt template used for planning animations. The output is a JSON formatted dictionary of semantic categories and their animation plan. }
    \label{fig:supp-prompt1}
\end{figure}

\begin{figure}[h]
    \centering
    \includegraphics[width=0.9\linewidth]{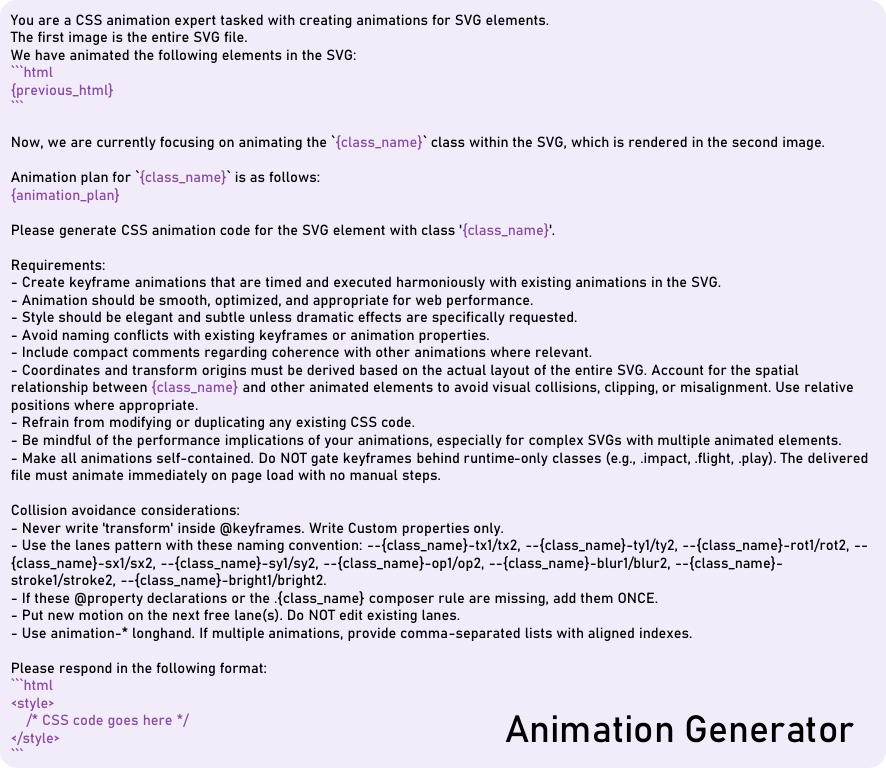}
    \caption{Prompt template used for generating animations. CSS codes are generated in a cascaded manner to bypass generation token length limits.}
    \label{fig:supp-prompt2}
\end{figure}

\clearpage
\section{Restructuring SVG Files with Semantic Labels}
\phantomsection
\label{anchor-restruct}

Once we obtain semantic labels for all primitives, we reorganize the SVG structure by regrouping primitives according to their labels wherever possible. 
This is non-trivial because existing SVG groupings are tightly coupled to the rendering order, and naively introducing new groups can disrupt this order and alter the final appearance.

Nevertheless, restructuring is crucial for enabling meaningful motion, as primitives that belong to the same semantic group can share attributes such as rotation axes, timing, and other animation parameters. 
To safely regroup primitives, we first flatten the SVG structure and ungroup all nested groups, while transferring group properties to the child primities, so that the rendering appears identical to the original. 
Then, we estimate the spatial extent (area) occupied by each primitive and use this to detect conflicting merges. 
Nest, we merge primitives with the same semantic label only when doing so introduces no conflicts with any primitives in between them in the rendering order. 
Finally, we augment each resulting group with metadata, including its bounding box, geometric center, and parent-child relationships, which we later use to drive animation.
We describe the steps in \Cref{alg:regroup} and plan to make all the implementation fully public upon acceptance.

\begin{algorithm}[h]
\caption{Resturcturing SVGs with semantic labels}
\label{alg:regroup}
\begin{algorithmic}[1]
\State \textbf{Inputs} SVG $S$, predicted label $\hat y(x)$ for each primitive $x$
\State \textbf{Output} regrouped SVG $S'$

\State \textbf{Flatten}
\State Traverse $S$ in original paint order and build a list $E=\{(e,\text{idx},\ell,B)\}$
\Statex \hspace{1.2em} $e$ is a cloned primitive with inherited properties baked in
\Statex \hspace{1.2em} $\text{idx}$ is the original paint index
\Statex \hspace{1.2em} $\ell=\hat y(e)$ is appended as the final class token
\Statex \hspace{1.2em} $B$ is a screen-space bounding box

\State \textbf{Regroup by label with a barrier test}
\For{each label $\ell$}
  \State $I_\ell \gets$ indices of $E$ with label $\ell$ in ascending paint order
  \State Greedily form groups $G[\ell]$ over $I_\ell$ using the rule:
  \Statex \hspace{1.2em} a candidate $j$ can join current group $G$ if no element of a different label
  \Statex \hspace{1.2em} whose index lies between $\min(G\cup\{j\})$ and $\max(G\cup\{j\})$
  \Statex \hspace{1.2em} overlaps any member of $G\cup\{j\}$ in screen space
\EndFor

\State \textbf{Compose regrouped SVG}
\State Create $S'$ with original attributes and non-drawables copied verbatim
\State For each group $g$ in order of its earliest index:
\Statex \hspace{1.2em} emit a \texttt{<g>} with class $\ell$-group or $\ell$-group-$k$
\Statex \hspace{1.2em} append members in original relative order
\Statex \hspace{1.2em} write light metadata: bounds, geometric center, paint-order index
\Statex \hspace{1.2em} optionally add parent and children links from the plan
\State \Return $S'$
\end{algorithmic}
\caption{Pseudocode for the SVG file restructuring process using the predicted semantic labels. }
\end{algorithm}

\end{document}